\newtheorem{theorem}{Theorem}
\icmltitlerunning{Adaptive Variational Particle Filtering in Non-stationary Environments}
\newcommand{\commentedbox}[2]{%
  \mbox{
    \begin{tabular}[t]{@{}c@{}}
    $\boxed{\displaystyle#1}$\\
    #2
    \end{tabular}%
  }%
}
\tikzset{
    place/.style={
        circle,
        thick,
        draw=blue!75,
        fill=blue!20,
        minimum size=6mm,
    },
    transitionH/.style={
        rectangle,
        thick,
        fill=black,
        minimum width=8mm,
        inner ysep=2pt
    },
    transitionV/.style={
        rectangle,
        thin,
        fill=blue,
        minimum height=8mm,
        inner xsep=2pt
    }
}
\begin{document} 

\twocolumn[
\icmltitle{Adaptive Variational Particle Filtering in Non-stationary Environments}

\icmlauthor{Mahdi Azarafrooz}{mazarafrooz@cylance.com}
\icmladdress{Cylance Inc., Irvine, CA USA}
\icmlkeywords{boring formatting information, machine learning, ICML}

\vskip 0.3in
]

\begin{abstract} 
Online convex optimization is a sequential prediction framework with the goal to track and adapt to the environment through evaluating proper convex loss functions. 
We study efficient particle filtering methods from the perspective of such framework. 

We formulate an efficient particle filtering methods for non-stationary environment by making connections with the online mirror descent algorithm which is known to be universal online convex optimization algorithm. 
 As a result of this connection, our proposed particle filtering algorithm proves to achieve optimal particle efficiency. 
\end{abstract} 

\section{Introduction}
Inference in {\it online} settings is challenging. It is because not only the posterior distributions should be approximated sequentially, but also one should take into account the {\it non-stationary} characteristics of the data. In these situations, suitable sequential inference procedures are required to {\it track} and {\it bind} the suitable probabilistic distance between the approximate and true posterior distribution at each time step. Otherwise the prediction errors would be propagated and leads to very poor predictions. A suitable framework for achieving online inference is online convex optimization (OCP) framework [18]. It is a sequential prediction framework with the goal to track and adapt to the environment through evaluating proper convex loss functions. By making connection between OCP and a recent efficient particle inference, we propose a particle filtering method that is suitable for online inference algorithms.
  
\textbf{Main related works} Using online convex optimization for sequential posterior approximation has been already discussed in [16,22]. However they parametrize the environment strategy space through the use of parameterized exponential family of distributions which resembles the variational inferences methodology and therefore carrying its limitations. Moreover the calculation of the loss term in [22] is not straightforward and requires other inference algorithms such as Monte Carlo Marko Chains (MCMC). Our proposed algorithm computes the loss term by reusing the generated particle.  Another interesting related work to ours is [19]. It takes advantage of the mirror descent algorithm, particle methods and provides theoretical guarantees. However, 3 main key differences are:  a) we address online filtering for time series data. b) {\it unlike [19,4], our proposed online inference algorithm doesn't have any assumption on the length of data a priori}. c) Unlike [19] our proposed method follows a deterministic particle selection approach. [23]  builds an online inference algorithm using particle learning (PL) method [24]. PL methods differ from sequential Monte Carlo (SMC) in that PL reverses the order of resampling and propagation procedures. 
 
\section{Preliminaries}

 All the vectors are denoted by bold symbols.
 
 \subsection{Online convex optimization}\label{cop}
One way to explain the OCP framework, in the context of {\it adversarial} environments, is by considering it as a repeated game between a forecaster and an adversary. It includes the following key ingredients:
\begin{enumerate}
\item Parameters of the environment $\bold{w}_n \in \Delta$, forecaster strategies $Q(\bold{w}_n) \in \mathcal{Q}$ , adversary strategies  $S(\bold{w}_n) \in \mathcal{S}$. 
 \item convex loss function $l(.)$ over the strategies.
\item {\it Shifting/Tracking Regret} defined as the following minimax metric:
\begin{eqnarray}\label{regret_function_shifting}
\begin{array}{l}
\mathcal{R}_N^{shifting}\equiv \\
\begin{split} 
 \displaystyle \min_{Q\in \mathcal{Q}} \max_{\bold{w}\in \Delta} \max_{S \in \mathcal{S}} \sum_{n=1}^{N} l(Q(\bold{w}_n))-l(S(\bold{w}_n))
\end{split}
\end{array}
\end{eqnarray}
\end{enumerate}

where $N$ is the length of the sequence of observations. The forecaster's goal is to yield the lowest total loss when played against the {\it arbitrary} sequence of adversary strategies. The goal of the OCP algorithms is to achieve sub-linearly bounded regret metric. One universal OCP algorithm is the online mirror descent algorithm MD [14].  MD casts the OCP framework to a first-order loss decrease optimization problem with Bregman distance regularizer as in Eq. \ref{class-MD}:
 \begin{eqnarray}\label{class-MD}
\begin{array}{l}
\begin{split} 
\hat{\bold{w}}_{n+1}=\underset{w_n \in \Delta}{\text{argmin}} \hspace{.1cm} \epsilon_n \langle \bold{w}_n,\nabla_{\bold{w}_n}l_n \rangle  + D(\bold{w}_n\|\hat{\bold{w}}_n)
\end{split}
\end{array}
\end{eqnarray}
where $\nabla_{\bold{w}}$ is the gradient w.r.t parameters of the environment , $\epsilon_n$ is  the step size parameter and $D$ is the Bregman distance. Given the fixed per-observation computational complexity constraint, the MD algorithm [16] achieves the optimal shifting regret of $\sqrt N$.  In the section \ref{MD}, we show how we can incorporate an online MD algorithm into particle filtering algorithms.

\subsection{Particle Filtering}

Assume a hidden Markov model (HMM) with the observations $\bold{y}_{N}=\{y_1,...,y_N\}$, the hidden states $\bold{x}_{N}=\{x_1,...,x_N\}$, $x_n\in \{1,...,M_n\}$ and $\boldsymbol{\theta}_{N}=\{\theta_1,...,\theta_N\}$ as a set of parameters that controls the transition and emission processes in Eq. \ref{state_space}: 

\begin{equation}\label{state_space}
\begin{array}{l}
p(x_{n},\theta_{n},y_n|x_{n-1},\theta_{n-1})=\\ \displaystyle p(\theta_{n}) \commentedbox{p(y_n|x_n,\theta_{n})}{\tiny emission}  \commentedbox{p(x_n|x_{n-1},\theta_{n-1})}{\tiny transition}
\end{array}
\end{equation}
We build our framework based on flexible non-parametric models. The complexity of these models (for example the number of hidden states, etc.) increase as the amount of data grows, in a flexible manner. To this aim, an infinite capacity HMM (iHMM) model assumes that the posterior distribution of parameters $p(\boldsymbol{\theta}_{n})$ depends on the hidden states $\bold{x}_{n}$ and observations $\bold{y}_{n}$ through a low dimensional vector of {\it sufficient statistics} $c_n$  
i.e $p(\boldsymbol{\theta}_{n}|\bold{x}_{n},\bold{y}_{n})=p(\boldsymbol{\theta}_{n}|c_n)$. The sufficient statistic must be updated using a {\it deterministic} recursion algorithm $\mathcal{C}$ sequentially such that $c_{n}=\mathcal{C}(c_{n-1},x_{n-1},y_{n-1})$. The existence of such deterministic recursion as well as proper analytical integrations imply that one can marginalize out the parameters $\boldsymbol{\theta}_{N}$ from Eq. \ref{state_space} as in  Eq. \ref{marginalized_HMM}.
\begin{eqnarray}\label{marginalized_HMM}
\begin{array}{l}
p(x_{n},y_n|x_{n-1},c_{n-1})= \\  \commentedbox{p(y_n|x_n,c_n=\mathcal{C}(c_{n-1},x_{n-1},y_{n-1}))}{\tiny dependency between observations $y_n$ and $y_{n-1}$ through recursion algorithm $\mathcal{C}(.)$ } \\ \times  p(x_n|x_{n-1},c_{n-1})\end{array}
\end{eqnarray}
\footnote{For better readability, the details on the analytical integration, sufficient statistics $c_n$ and their updating process for iHMM is moved to Appendix I.}  
In the filtering problems,  one is interested in deriving the posterior $p(\bold{x}_{n+1},\bold{y}_{n+1})$ from $p(\bold{x}_{n},\bold{y}_{n})$.  One popular filtering approach is particle method. This especially comes in handy when the closed form expressions for the posteriors are not available. A well-known example is the sequential Monte Carlo (SMC) method [2]. In the case of generative process in Eq. \ref{marginalized_HMM}, it first initializes a set of particles $\{x_0^k\}_{k=1}^K$. Also for the readability of the paper, we assume that each particle encapsulates the sufficient update process $\{c_{n}^k=\mathcal{C}(c_{n-1}^k,x_{n-1}^k,y_{n-1})\}_{k=1}^{K}$ as well as the analytical integration. Therefore from now on  $x_n^k$ represents $(x_n^k,c_n^k)$.

 It then follows these recursive steps:
\begin{enumerate}
\item {\it Propagation.} Using particles $\{x_n^k\}_{k=1}^K$:
\begin{eqnarray}\label{SMC}
\begin{array}{l}
\{W_n^k=p(y_n,x_n^k)\}_{k=1}^{K} , w_n^k=\frac{W_n^k}{\sum_{j=1}^KW_n^j} \\
\end{array}
\end{eqnarray}
\begin{eqnarray}\label{MC}
\begin{array}{l}
 Q(\bold{x}_{n},\bold{y}_{n})=\displaystyle \sum_{k=1}^{K}w_n^k\delta[x_n,x_n^k] 
 \end{array}
\end{eqnarray}
where $\delta$ is Kronecker-delta.
\item {\it Resampling}. Sample  $K$ new particles with replacement $\{x_{n+1}^k\}_{k=1}^K   \sim \text{Multinomial($K,w_n^1,...,w_n^K$)}$
\end{enumerate}


\section{Mirror descent variational particle filtering}\label{MD}
In order to reformulate the online filtering problem from an OCP perspective, we treat the environment as a posterior distribution, the strategies as the filtering strategies $Q(\bold{w}_n) $ and the MC particles and their weights $\bold{w}_n =[w_n^1,...,w_n^{K\times M_n}]   \in \Delta^{K\times M_n} $ as the parameters of the environment where $M_n$ is the number of hidden states at sequence index $n$ and $K$ is the number of particles. Moreover the constraint of fixed per-observation computational complexity in OCP gets translated to the constraint of fixed $K$ number of particles.

Similar to [3]  we builds our framework based on the Markov random field (MRF) assumption for the posterior $p(\bold{x}_{n},\bold{y}_{n})$ as in Eq. \ref{Markov_network} 

\begin{eqnarray}\label{Markov_network}
\begin{array}{l}
\displaystyle p(\bold{x}_{n},\bold{y}_{n})= \frac{ \prod_{i=1}^{n} f(x_{i},y_{i}) }{\exp(\Phi)}
\end{array}
\end{eqnarray}
where $\Phi$ is the log-partition function and $f(x_{i},y_{i})= p(y_i|x_i)  p(x_i|x_{i-1})$  the potential function. 

\begin{algorithm}
\caption{Mirror Descent Variational Particle Filtering (MD-VPA)}
\label{MD-VPA}
\begin{algorithmic}[1]
\STATE {\bfseries Input:} A decreasing sequence of strictly positive discounting factors $\{\epsilon_n\}$, e.g $\frac{1}{n}$, $M_0$, $K$ and initial weight vectors $\bold{w}_0=[w_0^1,...,w_0^{K\times M_0}]  \in \Delta^{K\times M_0}$
\FOR{$n=1,2,3,...,N$}
\FOR{$m=1,...,M_n ~ \& ~ k=1,...,K$}
\STATE \begin{eqnarray}\label{md-update}
\begin{split} 
W_n(k,m)=\\  \commentedbox{f(x_n^k=m,y_n)}{\tiny Free energy related term}  \commentedbox{\frac{ f(x_n^k=m,y_{n+1})^{\epsilon_n}} {( \sum_{k=1}^{K}f(x_n^k=m,y_{n+1}))^{\frac{\epsilon_n}{K}}}}{\tiny regret related term} \nonumber
\end{split}
\end{eqnarray} 
\ENDFOR
\STATE Select $K$ largest $W_n$ and normalize the weights $w_n^k$ for the new particles $x_n^k$.
\STATE  $Q(\bold{x}_{n},\bold{y}_{n})=\displaystyle \sum_{k=1}^{K}w_n^k\delta[x_n,x_n^k]$
\ENDFOR
\end{algorithmic}
\end{algorithm}

Let's define the loss function as the average of the predictive log-likelihood over the space of the environment space parameters $\bold{w}_n \in \Delta_n^{K \times M_n}$ as in Eq. \ref{loss}:
\begin{eqnarray}\label{loss}
\begin{array}{l}
\begin{split} \small
l_{n}(\bold{w}_n)=\\ -\int_{\bold{w}_n \in \Delta_n^{K \times M_n}} \log(p(\bold{x}_{n},\bold{y}_{n}^{y_n \leftarrow y_{n+1}})) d\bold{w}_n
\end{split}
\end{array}
\end{eqnarray}


It is as if we were observing $y_{n+1}$ instead of $y_{n}$ and calculate the potential function $f_n$ with $y_{n+1}$, given that nothing else in the history of hidden states and observation has changed. 
By replacing Eq. \ref{loss} in Eq. 2 and resolving a coordinate ascent algorithm leads to the proposed Mirror Descent Variational Particle Filtering (MD-VPA) described in Alg. \ref{MD-VPA}. It deletes or propagates $K$ particles {\it deterministically} (out of $K \times M_n$ candidates living in the simplex space $\Delta^{K\times M_n}$) based on their relative contribution to the negative free energy and loss. The derivation of this coordinate-ascent algorithm is demonstrated in Appendix II.
As the result of the MD prediction strategy, the following theorem is in order:
\begin{theorem}
MD-VPA achieves the optimal shifting regret of $\mathcal{O}(\sqrt N)$ for a sequence of length $N$.
\end{theorem}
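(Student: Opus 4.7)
The plan is to recognize MD-VPA as a direct instantiation of the online mirror descent update in Eq.~\ref{class-MD} with the Kullback--Leibler divergence as the Bregman regularizer, and then inherit the $\mathcal{O}(\sqrt{N})$ shifting regret bound already stated in Section~\ref{cop} for MD. Concretely, I would first solve the variational problem in Eq.~\ref{class-MD} in closed form using $D(\bold{w}_n\|\hat{\bold{w}}_n) = \sum_{k,m} w_n(k,m)\log\bigl(w_n(k,m)/\hat{w}_n(k,m)\bigr)$ subject to $\bold{w}_n \in \Delta^{K\times M_n}$. Standard Lagrangian analysis yields an exponentiated-gradient step $\hat{w}_{n+1}(k,m) \propto \hat{w}_n(k,m)\exp\bigl(-\epsilon_n \,\partial_{w(k,m)} l_n\bigr)$, and substituting the loss function in Eq.~\ref{loss} (where the functional derivative pulls out the potential $f(x_n^k=m,y_{n+1})$ evaluated at the one-step-ahead observation) produces precisely the multiplicative weight $W_n(k,m)$ appearing in Algorithm~\ref{MD-VPA}. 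The denominator $\bigl(\sum_k f(x_n^k=m,y_{n+1})\bigr)^{\epsilon_n/K}$ then arises as the normalizer required by the simplex constraint, confirming that MD-VPA is the KL-MD iterate written out explicitly.

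Having identified the correspondence, the second step is to verify that the deterministic top-$K$ truncation used by the algorithm does not degrade the regret. The idea is that the top-$K$ selection yields the best rank-$K$ sparse approximation of the full MD iterate in the sense of minimizing the KL projection error onto the set of $K$-sparse distributions on $\Delta^{K\times M_n}$, so the cumulative approximation error can be absorbed into a constant dominated by $\sqrt{N}$. This is analogous to the sparsification argument used in [19] for particle mirror descent, which I would invoke with one caveat: unlike [19], MD-VPA does not require knowledge of the horizon, since the step sizes $\epsilon_n$ are fixed a priori as a square-summable-but-not-summable sequence (e.g.\ $1/n$) rather than tuned to a known $N$. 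This matches the anytime nature required by the OCP framing in Section~\ref{cop}.

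The third step is to apply the MD shifting-regret theorem to the (untruncated) iterate: for the KL regularizer on the product simplex, convex losses of the form in Eq.~\ref{loss}, and bounded potentials $f(x_n^k=m,y_n)$, MD with step sizes $\epsilon_n = \Theta(1/\sqrt{n})$ achieves $\mathcal{R}_N^{shifting} = \mathcal{O}(\sqrt{N})$, as stated after Eq.~\ref{class-MD}. Composing this with the truncation bound from step two delivers the claimed $\mathcal{O}(\sqrt{N})$ shifting regret for MD-VPA.

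The hard part will be step two: rigorously showing that the deterministic top-$K$ pruning does not inflate the regret beyond $\mathcal{O}(\sqrt{N})$. This requires a Lipschitz-type bound on the loss under the KL geometry together with uniform bounds on the gradients after truncation, so that the MD analysis applies to the pruned iterate and not merely to its dense idealization. I expect this to reduce to a ``particle approximation'' lemma stating that KL-projection onto $K$-sparse distributions introduces at most an $\mathcal{O}(1/\sqrt{n})$ per-step error under boundedness of $f$, which then telescopes into the desired $\mathcal{O}(\sqrt{N})$ additive term. The boundedness of potentials and the convexity of the negative log-likelihood in $\bold{w}_n$ are the only non-trivial structural assumptions I would need to make explicit.
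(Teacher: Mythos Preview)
Your proposal takes a genuinely different route from the paper's proof, and the divergence is instructive.

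The paper's argument is essentially a one-line reduction to [16]: it asserts that the result in [16] already gives the $\mathcal{O}(\sqrt{N})$ shifting regret for mirror descent, and the only technical obligation is to verify the precondition of [16], namely \emph{strong Hessian convexity} of the log-partition function $\Phi(\bold{w}_n)$, i.e.\ $\nabla^2\Phi(\bold{w}_n)\succeq H\,\mathbb{1}$ for some $H>0$. The paper argues that the \emph{deterministic} top-$K$ selection is precisely what guarantees this: because only the $K$ highest-weight particles survive, none of the retained $w_n^k$ can be arbitrarily small, and since $\nabla^2\Phi(\bold{w}_n)$ coincides with the Fisher information matrix $\mathcal{J}(\bold{w}_n)$, the particles remain ``sufficiently informative'' and the Hessian is uniformly bounded below. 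In other words, the paper treats the truncation as the \emph{mechanism} that secures the needed curvature condition, not as a source of approximation error.

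Your plan inverts this logic. You regard the top-$K$ pruning as a potential perturbation of an idealized dense MD iterate and propose to bound its effect via a KL-projection/sparsification lemma in the style of [19]. You never mention the strong Hessian convexity hypothesis that [16] requires; instead you appeal to a generic KL-regularized MD bound under ``convex losses and bounded potentials.'' This is where your proposal has a gap relative to the paper: the $\sqrt{N}$ shifting-regret statement invoked after Eq.~\ref{class-MD} is the result of [16], and that result is stated under the strong-convexity assumption on $\Phi$. Without establishing that condition (or replacing it with an equivalent curvature argument), the third step of your plan does not go through as written. Your step-size choice $\epsilon_n=\Theta(1/\sqrt{n})$ also departs from the paper's $\epsilon_n=1/n$, which is tied to the strong-convexity regime of [16].

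What each approach buys: the paper's argument is shorter and makes explicit why deterministic selection matters (it is a curvature guarantee, not merely a computational convenience), but it leans heavily on [16] and on the Fisher-information identification without quantifying $H$. Your approach is closer to a self-contained MD analysis and raises the legitimate question of how pruning interacts with the regret, but as written it omits the curvature verification that the cited bound actually needs. A merged version---first observing that deterministic top-$K$ selection bounds the Fisher information below, then invoking [16]---would be both complete and concise.
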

\begin{proof} 
The proof is almost the same as [16]. However [16] assumes that the set of parameters $\bold{w}_n$ are selected such that strong Hessian convexity of $\Phi(\bold{w}_n) $ holds true. MD-VPA is free of this assumption. Given the finite number of particles $K$, the {\it deterministic} nature of the particle selection process (coordinate-ascent) in MD-VPA dictates all the particles to have high probability. By noting the equivalence of the $\nabla_{\bold{w}_n}^2\Phi(\bold{w}_n)$ with Fisher information matrix, $\mathcal{J}(\bold{w}_n)=-\mathbb E_{\bold{w}_n}(\nabla^2_{\bold{w}_n} \log p(\bold{x}_n,\bold{y}_{n}))$ then it is guaranteed that $\exists H>0,  \nabla^2\Phi(\bold{w}_n)  \succeq H\mathbb{1}_{\small KM_n\times KM_n}$.  In other words, the weight of the particles are sufficiently ``informative'' and therefore strong Hessian convexity requirement is guaranteed.
\end{proof}

\subsection{Discussion on Loss-function}
Using Eq. \ref{loss} one can interpret the loss as the average predictive log-likelihood over the weight of the particles $\bold{w}_n$. Theorem 1 then guarantees that average predictive log-likelihood stays bounded under the well-defined worst case scenarios. It achieves this by casting the variational distance as its regularizer. \footnote{Variational distance is equivalent to Bregman distance for Markov Random Fields } The variational distance however is evaluated using the $y_{n}$ rather than $y_{n+1}$.  We can evaluate it using $y_{n+1}$ instead. But one gets best filtering result when evaluating using $y_{n}$, since this way we can incorporate the information of 2 consecutive observations in a one-pass filtering fashion. This is where the non-parametric nature of the iHMM comes to the picture where evaluating the loss (evaluated using $y_{n+1}$) and variational free energy (evaluated using $y_n$) in one single step is a straightforward task thanks to the existence of sufficient statistics. 
The idea of incorporating the loss term and MD algorithm into Bayesian Inference is mentioned in [22] as well. Their method however requires computing the loss term using the MCMC approach.  The natural {\it particle} interpretation of the Alg. \ref{MD-VPA} enables it to compute the term $\mathbb E_{\bold{w}_n}(\log f(x_n,y_{n+1}))= \sum_{k=1}^{K}\frac{\log f(x^k_n,y_{n+1})}{K}$  by reusing the generated particle. Moreover, the sufficient statistics $c_n$ and the deterministic update algorithm $\mathcal{C}$, $\{c_{n}^k=\mathcal{C}(c_{n-1}^k,x_{n-1}^k,y_{n-1})\}_{k=1}^{K}$ comes handy to compute the $\{f(x^k_n,y_{n+1})\}_{k=1}^{K}$ terms easily. This makes the implementation of the MD-VPA algorithm very straightforward.

\section{Simulation Results}\label{sim}

We ran our simulations for both artificial and real data with the hyper-parameters $\alpha=\gamma=1$.  $\epsilon_n=\frac{1}{n}$ is set for all MD-VPA filtering algorithms. The hyper-parameters are explained in the Appendix I.

\subsection{Artificial Non-Stationary  Data}
We first generated a sequence of 150 data using a (non-negatively correlated) HMM with 3 states and the following transition and emission matrixes respectively:

$\tiny
 \begin{bmatrix}
   0 & 1/2 &1/2 \\
    1/2 & 1/2 & 0 \\
    1/2 & 0 & 1/2
\end{bmatrix}
,
 \begin{bmatrix}
   1/2 & 0 & 1/2 \\
    1/3 & 1/3 & 1/3 \\
     0 & 1/2 & 1/2 
\end{bmatrix}
$
concatanated by a negatively correlated HMM with 4 states and a multinomial emission distribution with 8 categories using the following transition and emission matrixes respectively:
 \[\tiny
 \begin{bmatrix}
   0 & 1/2 &1/2 & 0\\
    0 & 0 &1/2 & 1/2 \\
   1/2 & 0 & 0 & 1/2 \\
    1/2 & 1/2 &0 & 0
\end{bmatrix}
\]
 \[\tiny
 \begin{bmatrix}
   1/3 & 0 & 0 & 0 & 0 & 0 & 1/3 &1/3\\
    1/3 & 1/3 & 1/3 & 0 & 0 & 0 & 0 &0 \\
    0 & 0 & 1/3 & 1/3 & 1/3 & 0 & 0 &0 \\
    0 & 0 & 0 & 0 & 1/3 & 1/3 & 1/3 &0
\end{bmatrix}
\]

 Results for  $K=100$ particles are reported in Fig. \ref{changing-loglikelihood-error} . We purposely generated the first half of the sequence (first 150 sequence) using a non-negatively correlated HMM with 3 states, to show that SMC performs better when less exploration in the state space is required. However as posterior gets updated, MD-VPA tracks faster and performs better than SMC and VPA in terms of both the predictive log-likelihood and the estimation variance.

\begin{figure}[h!]
 \caption{Particle filtering for changing posterior}\label{changing-loglikelihood-error}
   \centering
    \includegraphics[width=0.25\textwidth]{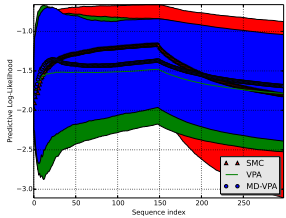}
\end{figure}

\subsection{Alice in Wonderland, Harry Potter and War and Peace}
We concatenated 600 subsequent characters from beginning of ``Alice in Wonderland'', 600 from ``Harry Potter"  and 600 from ``War and Peace".  The results are shown in Fig. \ref{changintext} for 50 particle and 50 random initial states. MDA outperforms SMC and VPA in terms of both the predictive log-likelihood and the estimation variance.

\begin{figure}[h!]
 \caption{Alice in Wonderland, Harry Potter , War and Peace}\label{changintext}
   \centering
    \includegraphics[width=0.25\textwidth]{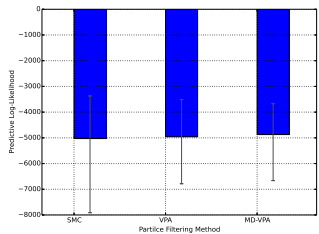}
\end{figure}

\subsection{Web-Click}
MD-VPA performs exceptionally good when it gets applied to the MSNBC.com Web Data Set  [26]. It contains sequential categorical data collected from news-related portions of msn.com. Each sequence in the dataset corresponds to page views of a user. Each event in the sequence corresponds to a user's request for a page. Requests are recorded at the level of page category.  It is natural that different users have different interests for visiting pages. Therefore data contains arbitrary sequences of users' web-hopping strategies. The results are shown in Fig. \ref{WebClick} with 100 particles. We have avoided plotting the estimation error as we observed no considerable difference between the compared algorithms. 

\begin{figure}[h!]
 \caption{Particle filtering for web-click }\label{WebClick}
   \centering
    \includegraphics[width=0.25\textwidth]{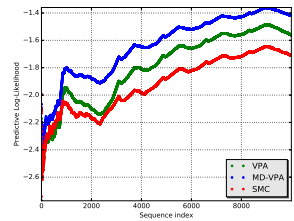}
\end{figure}

\section{Conclusion}
The main novelty of our work is to address the efficient particle inference for non-stationary sequential data from the perspective of online convex optimization approaches.  MD-VPA is implemented for iHMM modeling of the artificially generated data as well as the text and web data. It approximates and tracks the change in the posterior faster and more efficiently compared with other filtering mechanisms.  One interesting future work is to compare MD-VPA against adversarial environments. Strong links between the particle filtering methods and problem of sequential lossless coding can be established using our work and the results in [16,20]. For example, the {\it particle efficiency} concept in the online particle filtering methods can be mapped to the concept of {\it code redundancy} in the sequential lossless coding. An interesting future work can be examining these connections in more details. 

\section{REFERENCES}

\small

[1] Wainwright,M. J. \ \& M. I. Jordan. \ (2008) Graphical models, exponential families, and variational inference. {\it Foundations and Trends in Machine Learning, 1(1-2)}, pp.\ 1--305.

[2] Doucet, A., De Freitas, N., Gordon, N., et al. \ (2001). Sequential Monte Carlo methods in practice. New York: Springer Press.

[3]  Saeedi, A, Kulkarni, T.D, Mansinghka, V \ \& Gershman. \  (2015) S. Variational particle approximations .  arXiv:1402.5715v3


[4] Hoffman, M, Blei, D.M , Paisley, J \ \& Wang. C.  \ (2013) Stochastic variational inference. {\it Journal of Machine Learning Research, 14}. pp.\ 1303--1347.


[5]  Broderick, T, Boyd, N, Wibisono, A, Wilson, AC, \ \& Jordan, M.  \ (2013) Streaming variational Bayes. {\it Advances in Neural Information Processing Systems}. 

[6] Honkela, A \ \& Valpola, H. On-line variational Bayesian learning. \  (2003) {it In 4th International Symposium on Independent Component Analysis and Blind Signal Separation}. pp.\ 803--808.


[7] Tank, A, Foti, N \ \& Fox, E. \ (2015)Streaming variational inference for Bayesian nonparametric mixture models. {\it In International Conference on Artificial Intelligence and Statistics}.

[8] Theis, L \  \& Hoffman, M.D. \ (2015) A trust-region method for stochastic variational inference with applications to streaming data. arXiv preprint arXiv:1505.07649.

[9] Ahmed, A, Ho, Q,  Teo, C.H,  Eisenstein, J,  Xing, E.P  \ \&  Smola, A.J. \ (2011) Online inference for the infinite topic-cluster model: Storylines from streaming text. {\it In International Conference on Artificial Intelligence and Statistics}. pp.\ 101--109.

[10] Yao, L , Mimno, D \  \&  McCallum, A. \ (2009) Efficient methods for topic model inference on streaming document collections.  { \it In ACM Conference on Knowledge Discovery and Data Mining}. pp.\ 937--946.

[11] Doucet, A, Godsill, S \ \& Andrieu, C. \ (2000) On sequential MonteCarlo sampling methods for Bayesian filtering. {\it Statistics and Computing, 10(3)}. pp.\ 197--208.

[12]  Gal, Y \ \& Ghahramani, Z.   \ (2014). Pitfalls in the use of Parallel Inference for the Dirichlet Process. {\it Proceedings of the 31st International Conference on Machine Learning}

[13] Teh, Y. W., Jordan, M. I., Beal, M. J., \ \& Blei, D. M. \ (2006). Hierarchical Dirichlet processes. {\it Journal of the american statistical association}.

[14] Srebro, N, Sridharan, K \ \&  Tewari, A. \ (2011) On the Universality of Online Mirror Descent . {\it Advances in Neural Information Processing Systems 24}.


[15] Matthew J. Beal, Zoubin Ghahramani and Carl Edward Rasmussen, \ (2001). The Infinite Hidden Markov Model,  { \it Advances in Neural Information Processing Systems 14)}.

[16]  Raginsky, M, Willett, R.M, Horn, C, Silva, J \ \& Marcia, R.F \ (2012).  Sequential anomaly detection in the presence of noise and limited feedback. {\it IEEE Transactions onInformation Theory Vol. 58}. pp.\ 5544--5562. 

[17]  Krichevsky R. E. \ \&   Trofimov V. K.\  (1981). The performance of universal encoding. { \it IEEE Trans. Inform. Theory, vol. IT-27, no. 2}. pp.\ 199--207.

[18] 	Cesa-Bianchi, N \ \& Lugosi, G. \ (2006)Prediction, learning, and games. Cambridge University Press.

[19] Bo Dai, Niao He, Hanjun Dai and Le Song  (2016).  Provable Bayesian Inference via Particle Mirror Descent. {\it 19th International Conference on Artificial Intelligence and Statistics}. pp.\ 985?994.

[20] Shamir, G. I., \ \& Merhav, N. \ (1999). Low-complexity sequential lossless coding for piecewise-stationary memoryless sources. {\it Information Theory, IEEE Transactions on, 45(5)}. pp.\ 1498--1519.

[21] Bo Dai, Niao He, Hanjun Dai and Le Song, Provable Bayesian Inference via Particle Mirror Descent,  {\it The 19th International Conference on Artificial Intelligence and Statistics}, 2016.

[22] Guhaniyogi, R., Willett, R. M., \ \& Dunson, D. B. (2013). Approximated Bayesian Inference for Massive Streaming Data { \it Duke Discussion Paper}.

[23] A. Rodriguez,  (2011). Online learning for the infinite hidden Markov model. { \it Communications in Statistics - Simulation and Computation 40 (6)}. pp.\ 879-893.

[24]  Carlos M. Carvalho, Hedibert F. Lopes, Nicholas G. Polson, and Matt A. Taddy.  (2010). Particle learning for general mixtures. {\it Bayesian Anal Vol. 5}. pp.\ 709-740.

[25] Van Gael, J., Saatci, Y., Teh. \ \& Ghahramani , Z.  (2008). Beam sampling for the infinite hidden Markov model. {\it In Proceedings of the 25th International Conference on Machine Learning (ICML)}.

[26] \href{https://archive.ics.uci.edu/ml/datasets/MSNBC.com+Anonymous+Web+Data}{MSNBC-WebData}

\tiny

\section{Appendix I}

The sufficient statistics are $c_n=(M_n,\{t_{jc}\})$ where $M_n$ is the number of distinct hidden states up to the time $n$ and $\{t_{jc}\}$ is the number of transitions between states $j$ and $c$ up to time $n$.

The analytical integrations is according to the Chinese restaurant franchise in [13]. $x_n$ is assigned to state $c$ with probability proportional to $t_{x_{n-1}c}$ or to a state never visited from $x_{n-1}$, ($t_{x_{n-1}c}=0$) with probability proportional to $\alpha$. If an unvisited state is selected, $x_n$ is assigned to state $c$ with probability proportional to $\sum_j t_{jc}$, or a new state (i.e, one never visited from any state, $\sum_j t_{ic}=0)$ with probability proportional to $\gamma$.  The parameters $\alpha, \gamma$ are the hyper parameters for the iHMM.

The sufficient statistic updating process $\mathcal{C}_n$ is then simply the book keeping of the number of counts $\{t_{jc}\}$ and updating them at each time $n$ recursively.

\section{Appendix  II}
The goal is to solve the Eq. \ref{class-MD}. First note that Variational distance is equivalent to Bregman distance for Markov Random Fields. The using the following relation, we instead maximize negative free energy $\mathcal{L}(\mathcal{Q})$.
\begin{eqnarray}\label{VI-relation}
\begin{array}{l}
\Phi=KL[\mathcal{Q}\| p]+\mathcal{L}[\mathcal{Q}]
\end{array}
\end{eqnarray}

where $KL[\mathcal{Q}\|p]=\sum_{\bold{x}_{n} }\mathcal{Q}(\bold{x}_{n},\bold{y}_{n}) \log\frac{\mathcal{Q}(\bold{x}_{n},\bold{y}_{n})}{p(\bold{x}_n,\bold{y}_{n})} $ and
\begin{eqnarray}\label{negative-free-energy}
\begin{array}{l}
\mathcal{L}(\mathcal{Q})=\sum_{\bold{x}_n }\sum_{i=1}^n\mathcal{Q}(\bold{x}_{n},\bold{y}_{n}) \log\frac{f(x_i,y_{i})}{\mathcal{Q}(\bold{x}_{n},\bold{y}_{n})}
\end{array}
\end{eqnarray}

  Using Eq. \ref{MC}, one can parametrize $\mathcal{Q}$ and in turn the negative free energy term  $\mathcal{L}(\mathcal{Q})$ as follows:
\begin{eqnarray}\label{free-energy-particle}
\begin{array}{l}
\begin{split} 
\mathcal{L}[\bold{w}_n]= \sum_{k=1}^{K \times M_n} w_n^k\log\frac{f(x_n^k,y_{n})}{w_n^kV_n^k}
\end{split}
\end{array}
\end{eqnarray}

Moreover we want to use only $K$ particles (fixed per-observation computational complexity). This introduces the constraint $\sum_{k=1}^K w_n^k=1$. With this constraint being added as a Lagrange multiplier $\lambda$ to the Eq. \ref{MD}, and substituting for $\mathcal{L}[\bold{w}_n]$ using Eq. \ref{free-energy-particle}, we end up with the following formulation:
\begin{eqnarray}\label{free-energy-particle-lagrange}
\begin{array}{l}
\begin{split} 
\bold{w}_{n+1}=\underset{\bold{w}_n \in \Delta^{K\times M_n}}{\text{argmax}} \hspace{.1cm} -\epsilon_n \langle \bold{w}_n,\nabla_{\bold{w}_n}l_{n} \rangle + \\ \mathcal{L}[\bold{w}_n]+ \lambda  (\sum_{k=1}^K \bold{w}_n^k-1)
\end{split}
\end{array}
\end{eqnarray}

Noting that derivatives of log-partition function $\nabla_{\bold{w}_n}(\Phi(\bold{w}_n)) =[\mathbb E_{w^1}(\log f),...,\mathbb E_{w^{K\times M_n}}(\log f)]$  and taking derivative w.r.t $w_n^k$ and equating to zero we obtain:

\begin{eqnarray}\label{free-energy-particle-lagrange-derivative}
\begin{array}{l}

\log f(x_n^k,y_n)-\log w_n^k -\log V_n^k +\lambda-1\\ +  \epsilon_n \log f(x_n^k,y_{n+1})-  \epsilon_n \mathbb E_{w_n^k}(\log f(x_n,y_{n+1}))=0 \implies
\\\noindent \nonumber
w_{n+1}^k=  \frac{Z_{w_{n}}^{-1}f(x_n^k,y_n)f(x_n^k,y_{n+1})^{\epsilon_n}}{\exp(\epsilon_n\mathbb E_{w^k_n}(\log f(x_n,y_{n+1})))V^k}
\end{array}
\end{eqnarray},
where 

\begin{eqnarray}\label{lagrange-z}
\begin{array}{l}
\begin{split} 
Z_{w_n}=\exp(\lambda-1)^{-1}=\\ 
\sum_{k=1}^K\frac{f_n(x_n^k,y_n)f(x_n^k,y_{n+1})^{\epsilon_n}}{\exp(\epsilon_n\mathbb E_{w_n^k}(\log f(x_n,y_{n+1})))V_n^k}
\end{split}
\end{array}
\end{eqnarray}

Moreover the Hessian convexity $\nabla_{\bold{w}_n}^2\Phi(\bold{w}_n) \succ 0$ implies the concavity of the maximization problem and therefore existence of a solution can be confirmed.
By computing $\mathbb E_{w_n}(\log f(x_n,y_{n+1}))= \sum_{k=1}^{K}\frac{\log f(x^k_n,y_{n+1})}{K}$ and replacing it in Eq. \ref{free-energy-particle-lagrange-derivative}, we end up with Algorithm \ref{MD-VPA}. 

\end{document}